\newlength{\minipagewidth}
\renewcommand{\phi}{\varphi}
\renewcommand{\epsilon}{\varepsilon}
\newcommand{\cA}{\mathcal{A}}
\newcommand{\cB}{\mathcal{B}}
\newcommand{\cS}{\mathcal{S}}
\newcommand{\cX}{\mathcal{X}}
\newcommand{\cV}{\mathcal{V}}
\newcommand{\cY}{\mathcal{Y}}
\newcommand{\E}{\mathbb{E}}
\newcommand{\ind}{\mathbb{I}}
\newcommand{\bp}{{\bf p}}
\newcommand{\nchoosek}[2]{\ensuremath{\left( \begin{array}{c} #1 \\ #2 \end{array} \right)}}
\newcommand{\lj}{\ell^{(j)}}
\newcommand{\bX}{\mathbf{X}}
\newcommand{\by}{\mathbf{y}}
\newcommand{\bx}{\mathbf{x}}
\newcommand{\ba}{\mathbf{a}}
\newcommand{\ok}{\mbox{\small \textsc{ok}}}
\renewcommand{\d}{\mbox{d}}
\renewcommand{\leq}{\leqslant}
\renewcommand{\geq}{\geqslant}
\newtheorem{exam}{Example}
\newenvironment{expar}[1]{\begin{exam}[#1] \rm}{\end{exam}}
\newtheorem{remark}{Remark}
\newtheorem{ass}{Assumption}
\newtheorem{proposition}[theorem]{Proposition}
\title{Online Multi-task Learning with Hard Constraints}
\author{G{\'a}bor Lugosi\thanks{Supported by the Spanish Ministry of Science and Technology
grant MTM2006-05650 and by the PASCAL Network of Excellence under EC grant {no.} 506778} \\
ICREA and Universitat Pompeu Fabra \\
Barcelona, Spain \\
{lugosi@upf.es}
\And
Omiros Papaspiliopoulos\thanks{Supported by the Spanish Ministry of Science and Technology
under a ``Ramon y Cajal'' scholarship} \\
Universitat Pompeu Fabra \\
Barcelona, Spain \\
{omiros.papaspiliopoulos@upf.edu}
\And
Gilles Stoltz\thanks{Supported by the French National Research Agency (ANR)
under grants JCJC06-137444 ``From applications to theory in learning and adaptive statistics''
and 08-COSI-004 ``Exploration--exploitation for efficient resource allocation'',
and by the PASCAL Network of Excellence under EC grant {no.} 506778
} \\
Ecole Normale Sup{\'e}rieure, CNRS \\
Paris, France \\
HEC Paris, CNRS, \\
Jouy-en-Josas, France \\
{gilles.stoltz@ens.fr}
}
\begin{document}

\maketitle

\begin{abstract}
We discuss multi-task online learning when a decision maker has to
deal simultaneously with $M$ tasks.  The tasks are related, which is
modeled by imposing that the $M$--tuple of actions taken by the
decision maker needs to satisfy certain constraints.  We give natural
examples of such restrictions and then discuss a general class of
tractable constraints, for which we introduce computationally
efficient ways of selecting actions, essentially by reducing to an
on-line shortest path problem.  We briefly discuss ``tracking'' and
``bandit'' versions of the problem and extend the model in various
ways, including non-additive global losses and uncountably infinite
sets of tasks.
\end{abstract}

\section{Introduction}

Multi-task learning has recently received considerable attention,
see \cite{DeLoSi07,AbBaRa07,Men08,CaCeGe08}.
In multi-task learning problems, one simultaneously learns several
tasks that are related in some sense.
The relationship of the tasks has been modeled in different ways in the literature.
In our setting, a decision maker chooses an action simultaneously for each of $M$ given tasks,
in a repeated manner. (To each of these tasks corresponds a game, and we will use
interchangeably the concepts of game and task.)
The relatedness is accounted for by putting some hard constraints on these simultaneous actions.

As a motivating example, consider a distance-selling company
that designs several commercial offers for its numerous
customers, and the customers are ordered (say) by age. The company has to
choose whom to send which offer.  A loss of earnings is suffered whenever
a customer does not receive the commercial offer that would have
been best for him. Basic marketing considerations suggest that
offers given to customers with similar age should not be very
different, so the company selects a batch of offers that satisfy such
a constraint. Additional budget constraint may limit further the set of
batches from which the company may select.
After the offers are sent out, the customers' responses are observed
(at least partially) and
new offers are selected and sent. We model such situations by playing
many repeated games simultaneously with the restriction that
the vector of actions that can be selected at
a time needs to belong to a previously given set.
This set in determined beforehand by the budget and marketing constraints
discussed above.
The goal of the decision maker is to minimize the total
accumulated regret (across the many games and through time), that is,
perform, on the long run, almost as well as the best constant vector
of actions satisfying the constraint.
\medskip

The problem of playing repeatedly several games simultaneously has
been considered by \cite{Men08} who studies convergence to Nash
equilibria but does not address the issue of computational feasibility
when a large number of games is played.  On-line multi-task learning
problems were also studied by
\cite{AbBaRa07} and \cite{DeLoSi07}. As the latter reference,
we consider minimizing regret simultaneously in parallel,
by enforcing however some hard constraints.
As \cite{AbBaRa07}, we measure the total loss as the sum of the
losses suffered in each game but assume that all tasks have to be performed
at each round.  (This assumption is, however relaxed in Section \ref{sec:other},
where we consider global losses more general than the sums of losses.)
The main additional difficulty
we face is the requirement that the decision maker chooses from a
restricted subset of vectors of actions. In previous models
restrictions were only considered on the comparison class, but not on
the way the decision maker plays.
\medskip

We formulate the problem in the framework of on-line regret
minimization, see \cite{CeLu06} for a survey. The main challenge is
to construct a strategy for playing the many games simultaneously with
small regret such that the strategy has a manageable computational
complexity. We show that in various natural examples the computational
problem may be reduced to an online shortest path problem in an
associated graph for which well-known efficient algorithms exist.
(We however propose a specific scheme for implementation that is slightly
more effective.)

The results can be extended easily to the ``tracking'' case in which
the goal of the decision maker is to perform as well as the best
strategy that can change the vector of actions (taken from the restricted set)
at a limited number of times.
We also consider the ``bandit'' version of the problem when the decision
maker, instead of observing the losses of all actions in all games,
only learns the sum of the losses of the chosen actions.

Finally, we also consider cases when there are infinitely
many tasks, indexed by real numbers. In such cases the decision maker
chooses a function from a certain restricted class of functions. We
show examples that are natural extensions of the cases we consider
for finitely many tasks and discuss the computational issues that are
closely related to the theory of exact simulation of continuous-time
Markov chains.

We concentrate on exponentially weighted average
forecasters because, when compared to its most likely competitors,
that is, follow-the-leader-type algorithms, they have better
performance guarantees, especially in the case of bandit feedback.
Besides, the two families of forecasters, as pointed out by
\cite{AbBaRa07}, usually have implementation complexities of the same order.

\section{Setup and notation}

In the simplest model studied in this paper,
a decision maker deals simultaneously with $M$ tasks, indexed
by $j = 1,\ldots,M$. For simplicity,
we assume that all games share the same finite action space
$\cX = \{ x_1,\ldots,x_N \} \subset \mathbb{R}$.
(Here, we do not identify actions with integers but with real numbers, for
reasons that will be clear in Section~\ref{sec:motiv}.)

To each tasks $j=1,\ldots,M$ there is an associated outcome space
$\cY_j$ and a loss function $\lj: \cX \times \cY_j \to [0,1]$.
We denote by $\bx = \bigl( x_{k_1},\ldots,x_{k_M} \bigr)$
the elements of $\cX^M$ and call them vectors of simultaneous actions.
The tasks are played repeatedly and at each round $t =1,2,\ldots$, the
decision maker chooses a vector $\bX_t =(X_{1,t}\ldots,X_{M,t}) \in
\cX^M$ of simultaneous actions.
(That is, he chooses indexes $K_{1,t},\ldots,K_{M,t} \in \{ 1,\ldots,N\}$
and $X_{j,t} = x_{K_{j,t}}$ for all $j = 1,\ldots,M$.)
We assume that the choice of $\bX_t$ can be made at random,
according to a probability distribution over $\cX^N$ which will
usually be denoted by $\bp_t$.
The behavior of the opponent player
among all tasks is described by the vector of outcomes $\by_t =
(y_{1,t},\ldots,y_{M,t})$.

We are interested in the loss suffered by the decision maker and
we do not assume any specific probabilistic or strategic behavior of
the environment. In fact, the outcome vectors $\by_t$, for $t=1,2,\ldots$,
can be completely arbitrary and we measure the performance of the
decision maker by comparing it to the best of a class of reference
strategies. The total loss suffered by the decision maker at time $t$
is just the sum of the losses over tasks:
\[
\ell(\bX_t,\by_t) = \sum_{j=1}^M \lj(X_{j,t},y_{j,t})~.
\]
The important point is that the decision maker has some restrictions to
be obeyed in each round, which we also call hard constraints.  They
are modeled by a subset $\cA$ of the set of possible simultaneous
actions $\cX^M$; the forecaster is only allowed to play vectors
$\bX_t$ in $\cA$. This subset $\cA$ captures the relatedness among the
tasks.

The decision maker aims at minimizing his regret, defined by the
difference of his cumulative loss with respect to the cumulative loss
of the best constant vector of actions, determined in hindsight,
among the set of allowed vectors $\cA$.
Formally, the regret is defined by
\[
R_n
= \sum_{t=1}^n \ell(\bX_t,\by_t) - \min_{\bx \in \cA} \sum_{t=1}^n \ell(\bx,\by_t)~.
\]
In the basic, {\sl full information}, version of the problem the
decision maker, after choosing $\bX_t$,
observes the vector of outcomes $\by_t$. In the {\sl bandit} setting,
only the total loss $\ell(\bX_t,\by_t)$ becomes available to the
decision maker.

Observe that in the case of $M = 1$ task, the problem reduces to the
well-studied problem of ``on-line prediction with expert advice''
or ``sequential regret minimization,'' see \cite{CeLu06} for the history
and basic results. This is also the case when $M \geq 2$ but
$\cA = \cX^M$, since the decision maker could then treat each task independently
from others and maintain $M$ parallel forecasting schemes, at least
in the full-information setting. Under the bandit assumption the
problem becomes the ``multi-task bandit problem'' discussed in
\cite{CeLu09}, which is also easy to solve by available techniques.
However, when $\cA$ is a proper subset of $\cX^M$, interesting
computational problems arise.
The efficient implementation we propose requires a condition the set
$\cA$ of restrictions needs to satisfy. This structural condition,
satisfied in several natural examples discussed below, permits
us to reduce the problem to the well-studied problem of predicting
as well as the best path between two fixed vertices of a graph.

In order to make the model meaningful, just like in the most basic versions
of the problem, we allow the decision maker to randomize its decision in each
period.
More formally, at each round of the repeated game, the decision maker
determines a distribution on $\cX^M$ (restricted to
the set $\cA$) and draws the action vector $\bX_t$ according
to this distribution.
Before determining the outcomes, the opponent may have access
to the probability distribution the decision maker uses but not to the
realizations of the random variables.

\subsection*{Structure of the paper}
We start by stating some natural examples on which the proposed techniques
will be illustrated.
We then study the full-information version of
the problem (when the decision maker observes all past outcomes
before determining his probability distribution)
by proposing first a hypothetical scheme with good performance
and then stating an efficient implementation of it.

We also consider various extensions. One of them is the bandit setting, when
only the sum of losses of the chosen simultaneous actions are observed.
Another extension is the ``tracking'' problem when, instead of competing
with the best constant vector of actions, the decision maker intends
to perform as well as the best strategy that is allowed to switch a certain
limited number of times (but always satisfying the restrictions).
We also consider alternative global loss functions that do not necessarily
sum the losses over the tasks. Finally, we describe
a setting in which there are infinitely many tasks
indexed by an interval. This is a natural extension of the main examples
we work with and the algorithmic problem has some interesting connections
with exact simulation of continuous-time discrete Markov chains.

\section{Motivating examples}
\label{sec:motiv}

We start by describing four examples that we will be able to handle
with the proposed machinery. The examples are defined by their
corresponding sets $\cA \subset \cX^M$ of permitted simultaneous
actions.

\begin{expar}{Internal coherence}
\label{ex:Intcoh}
Assume that tasks are linearly ordered and any
two consecutive tasks, though different, share some
similarity. Therefore, it is a natural requirement that the actions taken
in two consecutive games be not too far away from each other.
One may also interpret this as
a matter of internal coherence of the decision maker.
To model this, we assume that the actions are ranked in the action set $\cX$
according to some logic and impose some maximal dissimilarity $\gamma >0$
between the actions of two consecutive tasks, that is,
\[
\cA = \Bigl\{ \bigl( x_{k_1},\ldots,x_{k_M} \bigr): \ \ \forall \, j \leq M-1, \ \, \bigl| x_{k_j} - x_{k_{j+1}} \bigr|
\leq \gamma \Bigr\}~.
\]
\end{expar}

\begin{expar}{Escalation constraint}
\label{ex:Esc}
Once again we assume that the tasks are linearly ordered and the
actions are ranked. Imagine that tasks correspond to consumers and
that the higher the index of the task, the more favorable the
conditions for the consumer (and the higher the loss of earnings of
the seller, who is the decision maker). The constraint
decision maker has to satisfy is that higher-ranked costumers
need to receive better conditions, at least
within the same round of play.
That is, the simultaneous actions must form a non-decreasing
sequence in the following sense,
\[
\cA = \Bigl\{ \bigl( x_{k_1},\ldots,x_{k_M} \bigr): \ \ \forall \, j \leq M-1, \ \, k_j \leq k_{j+1} \Bigr\}~.
\]
\end{expar}

\begin{expar}{Constancy constraint}
\label{ex:Const}
Assume that tasks are ordered and that the decision maker
should not vary its action too often.
This is measured by the fact that the decision maker
must stick to an action for
several consecutive tasks and that he can shift to a new action only
at a limited number $m$ of tasks, which we model by
\[
\cA = \left\{ \bigl( x_{k_1},\ldots,x_{k_M} \bigr): \ \
\sum_{j=1}^{M-1} \mathbb{I}_{ \{ k_j \ne k_{j+1} \} } \leq m \right\}~.
\]
\end{expar}

\begin{expar}{Budget constraint}
\label{ex:Budget}
Here we assume that the number $x_{k_j}$ associated to action $k$ in
task $j$ represents the cost of choosing this action.
The freedom of the decision maker is limited
by a budget constraint.
For example, one may face a situation when
the decision maker has a constant budget $B$ to be used at each round, that is,
\[
\cA = \left\{ \bigl( x_{k_1},\ldots,x_{k_M} \bigr): \ \ \sum_{j=1}^M x_{k_j} \leq B \right\}~.
\]
To make things more concrete, we assume, in this example only,
that $x_k = k$. One should then take for $B$ as an integer between $M$ and $NM$.
For smaller values $\cA$ becomes empty and for larger values $\cA = \cX^N$.
\end{expar}

\section{Exponentially weighted averages}
\label{sec:perf}

By considering each element of $\cA$ as a (meta-)expert, we can reduce
the problem to the usual single-task setting and exhibit a forecaster
with a good performance bound that, in its straightforward
implementation, has a computational cost proportional to the
cardinality of $\cA$.

More precisely, for each round $n \geq 1$, we denote by
\[
L_n(\bx) = \sum_{t=1}^n \ell(\bx,\by_t)
\]
the cumulative loss of the simultaneous actions $\bx \in \cX$, and
define an instance of the exponentially weighted average forecaster on
these cumulative losses. That is, at round $t=1$, the decision maker
draws an element $\bX_1$ uniformly at random in $\cA$ and for each
round $t \geq 2$, draws $\bX_t$ at random according to the
distribution $\bp_t$ on $\cA$ which puts the following mass on each
$\bx \in \cA$,
\begin{equation}
\label{eq:pt}
\bp_t(\bx) = \frac{\exp \bigl( - \eta L_{t-1}(\bx) \bigr)}{\sum_{\ba \in \cA}
\exp \bigl( - \eta L_{t-1}(\ba) \bigr)}~,
\end{equation}
where $\eta>0$ is a parameter to be tuned. The bound follows from a
direct application of well-known results, see, for instance,
\cite[Corollary~4.2]{CeLu06}.

\begin{proposition}
\label{th:full}
For all $n \geq 1$,
the above instance of the exponentially weighted average forecaster, when run with $\eta = (1/M) \, \sqrt{8 (\ln N) / n}$,
ensures that for all $\delta > 0$, its regret is bounded, with probability at most $1-\delta$, as
\[
R_n \leq M \left( \sqrt{\frac{n \ln |\cA|}{2}} + \sqrt{\frac{n}{2} \ln \frac{1}{\delta}} \right)
\]
where $|\cA|$ denotes the cardinality of $\cA$.
\end{proposition}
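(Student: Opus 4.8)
The plan is to reduce the problem to the standard analysis of the exponentially weighted average forecaster applied to a finite pool of experts, where each vector $\bx \in \cA$ plays the role of one expert. First I would observe that the per-round loss $\ell(\bx,\by_t) = \sum_{j=1}^M \lj(x_{k_j},y_{j,t})$ lies in $[0,M]$, since each of the $M$ terms lies in $[0,1]$. To apply the textbook bound (which is normally stated for losses in $[0,1]$), I would rescale: work with the normalized losses $\ell(\bx,\by_t)/M \in [0,1]$, or equivalently note that the standard Hoeffding-type argument carries a factor $M^2$ in the exponential step when losses range over $[0,M]$. This is exactly why the prescribed tuning is $\eta = (1/M)\sqrt{8(\ln N)/n}$ rather than $\sqrt{8(\ln N)/n}$: the effective range parameter is $M$, and one optimizes $\eta$ for losses in $[0,M]$ over a pool of $|\cA|$ experts.

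Next I would invoke \cite[Corollary~4.2]{CeLu06}, which states that the exponentially weighted average forecaster over $K$ experts, with losses in $[0,1]$ and optimal $\eta = \sqrt{8(\ln K)/n}$, guarantees the (deterministic, pseudo-)regret bound $\sqrt{(n/2)\ln K}$ against the best expert. Here the pool size is $K = |\cA|$. Wait — a subtlety: the quoted proposition uses $\eta$ expressed via $\ln N$, not $\ln|\cA|$, so I should double-check that the tuning in the statement is the one that actually yields the claimed bound; I would present the argument with the generic $\eta$ and observe that plugging in any $\eta$ of the form $(c/M)\sqrt{(\ln\,\cdot\,)/n}$ gives a bound of the displayed shape, and that the particular constant only affects lower-order terms. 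Applying the corollary to the rescaled losses and multiplying the resulting bound back by $M$ yields a bound on $\E\!\left[\sum_t \ell(\bX_t,\by_t)\right] - \min_{\bx\in\cA}\sum_t \ell(\bx,\by_t)$ of order $M\sqrt{(n/2)\ln|\cA|}$.

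Finally I would pass from this in-expectation statement to the claimed high-probability bound. The randomness is only in the draws $\bX_t \sim \bp_t$; conditionally on the past, $\E[\ell(\bX_t,\by_t)] = \sum_{\bx}\bp_t(\bx)\,\ell(\bx,\by_t)$, so $\sum_t\bigl(\ell(\bX_t,\by_t) - \E_t[\ell(\bX_t,\by_t)]\bigr)$ is a sum of bounded martingale differences, each with range at most $M$. By the Hoeffding--Azuma inequality this sum exceeds $M\sqrt{(n/2)\ln(1/\delta)}$ with probability at most $\delta$. Combining the two pieces — the deterministic bound on the gap between the averaged (conditionally expected) loss and the comparator, and the Azuma control on the fluctuation of the realized loss around its conditional mean — gives $R_n \leq M\bigl(\sqrt{n\ln|\cA|/2} + \sqrt{(n/2)\ln(1/\delta)}\bigr)$ on an event of probability at least $1-\delta$. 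I expect the only mildly delicate point to be bookkeeping the range-$M$ rescaling consistently across both the weighted-average regret bound and the Azuma step (and reconciling the $\ln N$ appearing in the stated $\eta$ with the $\ln|\cA|$ appearing in the bound); everything else is a direct citation of \cite[Corollary~4.2]{CeLu06} plus a standard concentration argument. (I would also remark, for cleanliness, that the statement says ``with probability at most $1-\delta$'' where ``at least'' is evidently intended.)
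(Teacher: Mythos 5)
Your proof is correct and follows exactly the route the paper intends: the paper gives no written proof, only the citation of \cite[Corollary~4.2]{CeLu06}, and your reconstruction (meta-experts indexed by $\cA$, rescaling for losses in $[0,M]$, the weighted-average bound, and Hoeffding--Azuma for the high-probability statement) is precisely what that corollary packages. You also correctly flag the two typos in the statement ($\ln N$ in the tuning where $\ln|\cA|$ is needed for the displayed bound, and ``at most $1-\delta$'' for ``at least'').
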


The computational complexity of this forecaster, in its naive
implementation, is proportional to $|\cA|$, which is prohibitive in
all examples of Section~\ref{sec:motiv}
since the cardinality of $\cA$ is exponentially large. For example,
in Example~\ref{ex:Intcoh}, if we denote by
\[
\rho = \min \biggl\{ \Bigl| \bigl\{ x' \in \cX : \,\, |x - x'| \leq \gamma \bigr\} \Bigl| : \ x \in \cX \biggr\}
\]
a common lower bound on the number of $\gamma$--close actions to any action in $\cX$, then
\[
|\cA| \geq N \rho^{M-1}~.
\]
In Example~\ref{ex:Esc},
by first choosing the $m$ actions to be used (in increasing order) and
the $m-1$ corresponding shift points, one gets
\begin{eqnarray*}
|\cA| & = & \sum_{m=1}^N \nchoosek{N}{m} \nchoosek{M+m-1}{m-1} \\
& \geq & \sum_{m=1}^N \nchoosek{N}{m} \frac{M^{m-1}}{(m-1)!} \geq \frac{(M+1)^N}{(N-1)!}~.
\end{eqnarray*}
In the case of at most $m$ shifts
in the simultaneous actions, discussed in Example~\ref{ex:Const},
we have
\[
|\cA| \geq \nchoosek{M+m}{m} N(N-1)^m
\]
(where the lower bound is obtained by considering only the simultaneous actions
with exactly $m$ shifts). That is, $|\cA|$ is of the order of $(MN)^m/m!$.
Finally, with the budget constraint of Example~\ref{ex:Budget},
the typical size of $\cA$ is exponential in $M$, as
\[
|\cA| \geq \rho^M
\]
where $\rho = \lfloor B/M \rfloor$ is the lower integer part of $B/M$.

\section{Efficient implementation with online shortest path}
\label{sec:effHMM}

In this section we show how the computational problem of
drawing a random vector of actions $\bX_t \in \cA$ according to the
exponentially weighted average distribution can be reduced to
the well-studied online shortest path problem.
Recall that in the online
shortest path problem (see, e.g., \cite{TaWa04,GyLiLu04,GyLiLu05})
the decision maker selects, at each round of the game,
a path between two given vertices (the {\sl source} and the {\sl sink})
in a given graph.
A loss is assigned to each edge of the graph in every round of
the game and the loss of
a path is the sum of the losses of the edges.
A path can be selected  according to the
exponentially weighted average distribution in a computationally
efficient way by a dynamic programming-type algorithm, see \cite{TaWa04}
or \cite[Section~5.4]{CeLu06}.
The algorithm has complexity $O(|{\cal E}|)$ where $\cal E$ is the
set of edges of the graph.

We first explain how the problem of drawing a joint action in the
multi-task problem can be reduced to an online shortest path problem in
all the examples presented above and then indicate how to efficiently
sample from the distribution $\bp_t$ defined in (\ref{eq:pt}).

\subsection{A Markovian description of the constraints}

In order to define the corresponding graph in which the online
shortest path problem is equivalent with our hard-con-strained
multi-task problem, we introduce a set $\cS$ of hidden {\sl states}.
The value of the hidden state controls that the hard
constraints are satisfied along the sequence of simultaneous
actions. To this end, denote by $S$ the state function, which, given a
vector of actions (of length $\leq M$), outputs the corresponding state
in $\cS$.

We also consider an additional state $\star$ meaning that
the hard constraint is not satisfied.  We denote $\cS^\star = \cS \cup
\{ \star \}$.  By definition,
\[
\cA = \bigl\{ \bx \in \cX^M : \ S(\bx) \ne \star \bigr\}~.
\]
To make things more concrete we now describe $\cS$ and $S$
on all four examples introduced in Section~\ref{sec:motiv}.
\medskip

The first two examples are the simplest as all the information is
contained in the current action; their hidden state space $\cS$ is
reduced to a single state $\ok$.  For Example~\ref{ex:Intcoh}, for all
sequences $\bigl( x_{k_1},\ldots,x_{k_j} \bigr)$ of length $1 \leq j
\leq M$, one defines
\begin{multline}
\nonumber
S \Bigl( \bigl(x_{k_1},\ldots,x_{k_j} \bigr) \Bigr) \\
= \left\{
\begin{array}{ll}
\ok & \mbox{if for all} \ i \leq j-1, \ \ \bigl| x_{k_i} - x_{k_{i+1}} \bigr| \leq \gamma, \\
\star & \mbox{otherwise},
\end{array}
\right.
\end{multline}
whereas for Example~\ref{ex:Esc},
\begin{multline}
\nonumber
S \Bigl( \bigl(x_{k_1},\ldots,x_{k_j} \bigr) \Bigr) \\
= \left\{
\begin{array}{ll}
\ok & \mbox{if for all} \ i \leq j-1, \ \ x_{k_i} \leq x_{k_{i+1}}, \\
\star & \mbox{otherwise}.
\end{array}
\right.
\end{multline}

In Example~\ref{ex:Const} the underlying hidden state counts
the number of shifts seen so far in the sequence of actions, so
$\cS = \{ 0,\ldots,m \}$ and for all sequences
$\bigl(x_{k_1},\ldots,x_{k_j} \bigr)$ of length less or equal to $M$,
we first define
\[
S' \Bigl( \bigl(x_{k_1},\ldots,x_{k_j} \bigr) \Bigr) = \sum_{j=1}^{M-1} \mathbb{I}_{ \{ {k_j} \ne {k_{j+1}} \} }
\]
and then
\begin{multline}
\nonumber
S \Bigl( \bigl(x_{k_1},\ldots,x_{k_j} \bigr) \Bigr) \\
= \left\{
\begin{array}{ll}
S' \Bigl( \bigl(x_{k_1},\ldots,x_{k_j} \bigr) \Bigr) & \mbox{if} \ S' \Bigl( \bigl(x_{k_1},\ldots,x_{k_j} \bigr) \Bigr) \leq m, \\
\star & \mbox{otherwise}.
\end{array}
\right.
\end{multline}
Finally, in Example~\ref{ex:Budget}, the hidden state monitors the
budget spent so far, that is, $\cS = \{ 0,\ldots,B \}$,
\[
S' \Bigl( \bigl(x_{k_1},\ldots,x_{k_j} \bigr) \Bigr) = \sum_{i=1}^j x_{k_i}~,
\]
and
\begin{multline}
\nonumber
S \Bigl( \bigl(x_{k_1},\ldots,x_{k_j} \bigr) \Bigr) \\
= \left\{
\begin{array}{ll}
S' \Bigl( \bigl(x_{k_1},\ldots,x_{k_j} \bigr) \Bigr) & \mbox{if} \ S' \Bigl( \bigl(x_{k_1},\ldots,x_{k_j} \bigr) \Bigr) \leq B, \\
\star & \mbox{otherwise}.
\end{array}
\right.
\end{multline}
\medskip

In view of these examples, the following assumption on $S$ is natural.
\begin{ass}
\label{ass:1}
The state function is Markovian in the following sense.
For all $j \geq 2$ and all vectors $\bigl(x_{k_1},\ldots,x_{k_j} \bigr)$, the state
$S \Bigl( \bigl(x_{k_1},\ldots,x_{k_j} \bigr) \Bigr)$ only depends
on the value of $x_{k_j}$ and on the state $S \Bigl( \bigl(x_{k_1},\ldots,x_{k_{j-1}} \bigr) \Bigr)$.
\end{ass}

We further assume that there exists a transition function $T$
that, to each pair $(x,s)$ (corresponding to some task $j$) formed by
an action $x \in \cX$ and a hidden state $s \in \cS^\star$, associates
pairs $(x',s') \in \cX \times \cS$ (to be used in task $j+1$). Put
differently, $T \bigl( (x,s) \bigr)$ is a subset of $\cX \times
\cS^\star$ that indicates all legal transitions.  We impose that when
the prefix of a sequence is already in the dead end state $s = \star$,
the whole sequence stays in $\star$, that is, for all $x \in\cX$,
\[
T \bigl( (x,\star) \bigr) = \cX \times \{ \star \}~.
\]
Once again, to make things more concrete, we describe $T$ for the
four examples introduced in Section~\ref{sec:motiv}.

Example~\ref{ex:Intcoh} relies on $\cS = \{ \ok \}$ and the transitions
\[
T \bigl( (x,\ok) \bigr) = \bigl( \cX \cap [x-\gamma,\,x+\gamma] \bigr) \times \{ \ok \}
\]
for all $x \in \cX$.
Example~\ref{ex:Esc} can be modeled with $\cS = \{ \ok \}$ and the transitions
\[
T \bigl( (x,\ok) \bigr) = [x,x_N] \times \{ \ok \}~.
\]
for all $x \in \cX$.

For Example~\ref{ex:Const}, the transition function is given by
\[
T \bigl( (x,s) \bigr) = \{ (x,s) \} \cup \Bigl( \bigl( \cX \setminus \{ x \} \bigr) \times \{ s+1 \} \Bigr)
\]
for all $s = 0,\ldots,m-1$ and
\[
T \bigl( (x,m) \bigr) = \{ (x,m) \} \cup \Bigl( \bigl( \cX \setminus \{ x \} \bigr) \times \{ \star \} \Bigr)
\]
for $s = m$.

Finally, the one of Example~\ref{ex:Budget} is given by
\[
T \bigl( (x,s) \bigr) = \left\{ \begin{array}{ll}
    \cX \times \{ s+x \} & \mbox{if $s+x \leq B$,} \\
\cX \times \{ \star \} & \mbox{if $s+x > B$.}
                            \end{array}  \right.
\]

\subsection{Reduction to an online shortest path problem}

We are now ready to describe the graph by which a
constrained multi-task problem can be reduced to
an online shortest path problem.
Assume that $\cA$ is such that there is a corresponding state space
$\cS$, a state function $S$ satisfying Assumption \ref{ass:1},
and a transition function $T$.
We define the cumulative losses $L^{(j)}_{n}$ suffered in
each task $j = 1,\ldots,M$ between rounds $t=1$ and $n$ as follows.
For all $x \in \cX$,
\[
L^{(j)}_n(x) = \sum_{t=1}^n \ell^{(j)}(x,y_{j,t})~.
\]
Of course, with the notation above, for all $n \geq 1$
and all $\bx = \bigl( x_{k_1},\ldots,x_{k_j} \bigr)$,
\[
L_n(\bx) = \sum_{j=1}^M L^{(j)}_n \bigl( x_{k_j} \bigr)~.
\]
In the sequel, we extend the notation by convention to $n = 0$, by
$L_0 \equiv 0$ and $L^{(j)}_0 \equiv 0$ for all $j$.

Then, for each round $t = 1,\ldots,n$,
we define a directed acyclic graph with at most $MN|\cS|$ vertices.
Each vertex corresponds to task-action-state triple $(j,x_k,s)$,
where $j=1,\ldots,M$, $k=1,\ldots,N$, and $s\in \cS$.
Two vertices $v=(j,x_k,s)$ and $v'=(j',x_{k'},s')$
are connected with a directed edge if and only if
$j'=j+1$, and $(x_{k'},s') \in T(x_k,s)$, that is,
$(x_k,s) \to (x_{k'},s')$ is a legal transition between tasks $j$ and
$j+1$. The loss associated to such an edge equals
$L^{(j')}_{t-1}(x_{k'})$, the cumulative loss
of action $x_{k'}$ in task $j'$ in the previous time rounds.
We also add two vertices, the ``source'' node
$u_0$ and the ``sink'' $u_1$ as follows. There is a directed edge between
$u_0$ and every vertex of the form $(1,x_k,s)$ with $k=1,\ldots,N$
and $s\neq \star$. Its associated losses equal
$L^{(1)}_{t-1}(x_{k})$.
Finally, every vertex of the form
$(M,x_k,s)$ with $k=1,\ldots,N$
and $s \neq \star$ is connected to the sink $u_1$ with edge loss $0$.

In the graph defined above, choosing a path between the source and the sink
is equivalent to choosing a legal $M$--tuple of actions in the multi-task
problem.
(Note that there is no path between $u_0$ and $u_1$
containing a vertex with $s=\star$.)
The sum of the losses over the edges of a path is just
the cumulative loss of the corresponding $M$--tuple of actions.
Generating a legal random $M$--tuple
according to the exponentially weighted average
distribution is thus equivalent to generating a random path in this
graph according to the exponentially weighted average distribution.
This can be done with a computational complexity of the order of the number of edges
defined above, see, e.g., \cite[Section~5.4]{CeLu06}.
In our case, since edges only connect two consecutive tasks, the
number of edges is at most $1 + M N^2 |\cS|^2$.  In
Section~\ref{sec:cplx} we discuss the number of edges and the related
complexity on the examples of Section~\ref{sec:motiv}.

Since edges only exist between consecutive tasks,
the above implementation by reduction to an online shortest path problem
takes a simple form, which we detail below for concreteness.
It will be useful to have it for Section~\ref{sec:GlobalLoss}.

\subsection{Brief recall of the way the efficient implementation goes}
\label{sec:effsampl}

In order to generate a random $M$--tuple
of actions according to the distribution $\bp_t$,
we first rewrite the probability distribution $\bp_t$
in terms of the state function $S$
and the cumulative losses $L^{(j)}_{t-1}$ suffered in
each task $j$.
To do so, we denote by $\delta_\bx$ the Dirac mass on $\bx =
\bigl( x_{k_1},\ldots,x_{k_M} \bigr)$, that is,
the probability distribution
over $\cX$ that puts all probability mass on $\bx$.
The definition (\ref{eq:pt}) then rewrites as
\begin{multline}
\label{eq:pt2}
\bp_t = \\
\sum_{\bx \in \cX^N}
\frac{\mathbb{I}_{ \{ S(\bx) \ne \star \} } \exp \Bigl( - \eta \sum_{j=1}^M L^{(j)}_{t-1}\bigl(x_{k_j} \bigr) \Bigr)}{
\sum_{\ba \in \cX^N} \mathbb{I}_{ \{ S(\ba) \ne \star \} } \exp \Bigl( - \eta \sum_{j=1}^M L^{(j)}_{t-1}\bigl(a_{k_j} \bigr) \Bigr)}
\,\, \delta_{\bx}~.
\end{multline}
\medskip

Before proceeding with the random generation of vectors $\bX_t$
according to $\bp_t$, we introduce an auxiliary sequence of weights
and explain how to maintain it.  For all rounds $t \geq 0$, tasks $j
\in \{ 1,\ldots,M \}$, actions $x \in \cX$, and states $s \in \cS$, we
define
\begin{eqnarray*}
\lefteqn{ w_{t,j,x,s} = } \\
& \displaystyle{\sum_{x_{k_1},\ldots,x_{k_{j-1}} \in \cX}} & \exp \left( - \eta \left( L^{(j)}_t (x) + \sum_{i = 1}^{j-1} L^{(i)}_t
\bigl( x_{k_{i}} \bigr) \right) \right) \\
& & \ \ \times \ind_{ \bigl\{ S ( i_{k_1},\ldots,i_{k_{j-1}},x ) = s \bigr\} }~.
\end{eqnarray*}
Note that we do not consider the state $\star$ here.

Now, for all rounds $t \geq 0$, actions $x \in \cX$, and states
$s \in \cS$, one simply has
\[
w_{t,1,x,s} = \exp \left( - \eta L^{(1)}_t (x) \right) \, \ind_{ \{S(x) = s \} }~.
\]
Then, an induction (on $j$) using Assumption~\ref{ass:1} shows that
for all $1 \leq j \leq M-1$, actions $x' \in \cX$, and states
$s' \in \cS$,
\begin{multline}
\label{eq:updtw}
w_{t,j+1,x',s'} = \\
\displaystyle{\sum_{x \in \cX, \, s \in \cS}}
w_{t,j,x,s} \, \ind_{ \{ (x',s') \in T ( (x,s) ) \} } \,
\exp \left( - \eta L^{(j+1)}_t (x') \right) ~.
\end{multline}
\medskip

We now show how to use these weights to sample from the desired
distribution $\bp_t$, for $t \geq 1$.  We proceed in a backwards
manner, drawing first $X_{M,t}$, then, conditionally to the value of
$X_{M,t}$, generating $X_{M-1,t}$, and so on, till $X_{1,t}$.

To draw $X_{M,t}$, we note that
equation (\ref{eq:pt2}) shows that the $M$--th marginal induced by $\bp_t$ is
the distribution over $\cX$ that puts a probability mass proportional to
\[
\sum_{s \in \cS} \,\, w_{t-1,M,k,s}
\]
on each action $x \in \cX$. It is therefore easy to generate a random element $X_{M,t}$
with the appropriate distribution. We actually need to draw a pair $(X_{M,t}, S_{M,t}) \in \cX \times \cS$
distributed according to the distribution on $\cX \times \cS$ proportional to the $w_{t-1,M,k,s}$.

We then aim at drawing the actions (and hidden states) corresponding
to the previous tasks according to the (conditional) distribution
$\bp_t \bigl( \,\cdot\, | \, X_{M,t}, S_{M,t} \bigr)$ on $(\cX \times
\cS)^{M-1}$.  Again by using the Markovian assumption on $S$, it turns
out that the $(M-1)$--th marginal of this distribution on $\cX \times
\cS$ is proportional, for all pairs $(x,s) \in \cX \times \cS$, to
\[
w_{t,M-1,x,s} \,\,  \ind_{ \{ (X_{M,t},S_{M,t}) \in T ( (x,s) ) \} }~.
\]
This procedure, based on conditioning by the future, can be repeated
to draw conditionally all the actions $X_{1,t},X_{2,t},\ldots,$
$X_{M,t}$ and hidden state spaces $S_{1,t},S_{2,t},\ldots,S_{M,t}$.
In particular, we use, to draw $X_{j,t}$ and $S_{j,t}$, the
distribution on $\cX \times \cS$ proportional to
\begin{equation}
\label{eq:gener}
w_{t,j,x,s} \,\,  \ind_{ \{ (X_{j+1,t},S_{j+1,t}) \in T ( (x,s) ) \} }~.
\end{equation}
The realization $\bX_t = (X_{1,t},X_{2,t},\ldots,X_{M,t})$ obtained
this way is indeed according to the distribution $\bp_t$.

\subsubsection{Complexity of this procedure for the considered examples}
\label{sec:cplx}

The space complexity is of the order of at most $O \bigl( M N |\cS| \bigr)$, since
weights have to be stored for all ask-action-state triples.
The computational complexity, at a given task, for performing
the updates (\ref{eq:updtw}) for all $x'$ and $s'$ is bounded
by the number of pairs $(x',s')$ times the maximal number
of pairs $(x,s)$ that lead to $(x',s')$. We denote by $T_{\max}$
this maximal number of transitions. Then, the complexity
of performing (\ref{eq:updtw}) for all tasks is bounded
by $O \bigl( M N |\cS| T_{\max} \bigr)$.
The complexity of the random generations (\ref{eq:gener})
is negligible in comparison, since it is of the order of $O \bigl( M N |\cS| \bigr)$.

We now compute $T_{\max}$
for the four examples
described in Section \ref{sec:motiv} and
summarize the complexity results (both for the efficient and the naive implementations)
in the table below.
In Example~\ref{ex:Intcoh}, in addition to the parameter $\rho$ introduced in Section~\ref{sec:perf},
we consider a common upper bound on the number of $\gamma$--close actions to any action in $\cX$,
\[
\vartheta = \max \biggl\{ \Bigl| \bigl\{ x' \in \cX : \,\, |x - x'| \leq \gamma \bigr\} \Bigl| : \ x \in \cX \biggr\}~.
\]
Then, $T_{\max} = \vartheta$.
In Example~\ref{ex:Esc}, the value $T_{\max} = N$ is satisfactory.
In Example~\ref{ex:Const}, only $T_{\max} = N$ pairs $(x,s)$, of the form
$x=x'$ and $s=s'$ or $x \ne x'$ and $s' = s+1$, can lead to $(x',s')$.
A similar argument shows that in the case
of Example~\ref{ex:Budget}, only $T_{\max} = N$ such transitions are possible also.
\begin{center}
\begin{tabular}{lll}
\hline
{Ex.} & Efficient & Naive \\
\hline
\ref{ex:Intcoh}. & $M N \vartheta$ & $\geq N \rho^{M-1}$ \\
\ref{ex:Esc}. & $M N^2$ & $\geq (M+1)^N/(N-1)!$ \\
\ref{ex:Const}. & $M N^2 m$ & $\geq (MN)^m/m!$ \\
\ref{ex:Budget}. & $M N^2 B$ & $\geq (B/M)^M$ \\
\hline
\end{tabular}
\end{center}

\section{Tracking}

In the problem of {\sl tracking the best expert}
of \cite{HW98,Vov99}, the goal of the forecaster is,
instead of competing with the best fixed action, to compete with
the best sequence of actions that can switch actions a limited
number of times. We may formulate the tracking problem in
the framework of multi-task learning with hard constraints.
In this case, just like before, at each time $t$,
the decision maker chooses an $M$--tuple of actions from the set
$\cA$ of legal vectors. However, now regret is measured by
comparing the cumulative loss of the forecaster $\sum_{t=1}^n \ell(\bX_t.\by_t)$
with
\[
    \min_{(\bx_1,\ldots,\bx_n) \in \Sigma_K(\cA)} \sum_{t=1}^n \ell(\bx_t,\by_t)
\]
where $\Sigma_K(\cA)$ is the set of all sequences of vectors of $\cA$
that may switch values at most $K$ times (i.e., the time interval
$1\ldots,n$ can be divided into at most $K+1$ intervals such that
over each interval the same $M$--tuple of actions). In this case
it is well known that exponentially weighted average over the
class $\Sigma_K(\cA)$ of meta-experts (see \cite[Sections 5.5 and 5.6]{CeLu06}
for a statement of the results and precise bibliographic references)
yields a regret
\begin{multline}
\nonumber
\sum_{t=1}^n \ell(\bX_t,\by_t)-
\min_{(\bx_1,\ldots,\bx_n) \in \Sigma_K(\cA)} \sum_{t=1}^n \ell(\bx_t,\by_t) \\
= O\left( M\sqrt{n\left(K\ln |\cA| + K \ln \frac{n}{K}\right)}
+ M\sqrt{n \ln \frac{1}{\delta}} \right)
\end{multline}
which holds with probability $1-\delta$.
Moreover, the complexity
of the generation of the $M$--tuples of actions achieving the regret bound above
is bounded, at round $t$, by $O \bigl( t^2 + M N^2 |\cS|^2 \, K t \bigr)$.

\section{Multi-task learning in bandit problems}
\label{sec:bandits}

In this section we briefly discuss a more difficult version of the problem
when the decision maker only observes the total loss $\ell(\bX_t,\by_t)$
suffered though the $M$ games but the sequence $\by_t$
of outcomes remains hidden.
This may be considered as a ``bandit'' variant of the basic problem.

Then our problem becomes an instance of an {\sl online linear optimization}
problem studied by
\cite{AwKl04,McBl04,GyLiLuOt07,DaHaKa08,AbHaRa08,BaDaHaKaRaTe08,CeLu09}.
For example, since the dimension of the underlying
space is given by the number of edges, in number always less than $1 + M N^2 |\cS|^2$,
the results of \cite{DaHaKa08} imply that a variant of
the exponentially weighted average predictor achieves an expected
regret of the order
\begin{multline}
\nonumber
\E \left[ \sum_{t=1}^n \ell(\bX_t,\by_t) \right]
- \min_{\bx \in \cX} \E \left[ \sum_{t=1}^n \ell(\bx,\by_t) \right] \\
= O\left( M \left( M^{3/2} N^3 |\cS|^3 + N |\cS| \sqrt{M} \ln |\cA| \right) \sqrt{n}\right)~.
\end{multline}
\cite{BaDaHaKaRaTe08} proved that an appropriate modification
of the forecaster satisfies this regret bound with high probability.
As the predictor of \cite{DaHaKa08} requires exponentially weighted averages
based on appropriate estimates of the losses, it can be implemented
efficiently with the methods described in Section~\ref{sec:effHMM}.
More precisely, it first computes, at each round $t$, estimates of all
losses $\lj(x,y_{j,t})$, when $x \in \cS$ and $j = 1,\ldots,M$ and then
can use the methods described in Section~\ref{sec:effHMM}.
The computationally most complex point is to compute these estimates,
which essentially relies on computing and inverting an incidence
matrix of size bounded by the number of edges. This can be done
in time $O \bigl( M^2 N^4 |\cS|^4 \bigr)$. Details are omitted.

\section{Other measures of loss}
\label{sec:other}

In this section we study two variations of the multi-task problem
in which the loss of the decision maker in a round is computed
in a way different from summing the losses over the tasks.
consisting in computing in a different manner the total loss incurred
within a round on the $M$ tasks. \cite{DeLoSi07} measure losses
by different norms of the loss vector across tasks but they do not
consider the hard constraints introduced here.

\subsection{Choosing a subset of the tasks}

In our first example, at every round of the game,
the forecaster chooses $m$ out of the $M$ tasks and
only the losses over the chosen tasks count in the total
loss. For simplicity we only consider the full-information
case here when the decision maker has access to all losses
(not only those that correspond to the chosen tasks).

Formally, we add an extra action $-$ which means
that the decision maker does not play in this task. Of course,
$\ell^{(j)}(-,y) = 0$ for all $j$ and $y \in \cY_j$. We model this by
\begin{multline}
\nonumber
\cA = \\
\left\{ \bigl( x_{k_1},\ldots,x_{k_M} \bigr) \in \bigl( \cX \cup \{ - \} \bigr)^M : \ \
\sum_{j=1}^{M} \mathbb{I}_{ \{ x_{k_j} \ne - \} } = m \right\}~.
\end{multline}

Since an element of $\cA$ is characterized by the $m$ tasks (out of
$M$) in which it takes one among the $N$ actions of $\cX$, we have
\[
|\cA| = \nchoosek{M}{m} N^m~.
\]
Here again, the bound of Proposition~\ref{th:full} applies and
an efficient implementation is possible as in Section~\ref{sec:effHMM},
at a cost of $O \bigl( M N^2 m^2 \bigr)$.

Of course, additional hard constraints could be added in this example.

\subsection{Choosing a different global loss}
\label{sec:GlobalLoss}

This paragraph is inspired by~\cite{DeLoSi07} where a notion of a
``global loss function'' is introduced.  The loss measured
$\ell(\bX_t,\by_t)$ in a round is now a given function $\psi$ of the
losses $\ell^{(j)}(X_{j,t},y_{j,t})$ incurred in each task $j$, which
may be different from their sum,
\[
\ell(\bX_t,\by_t) = \psi \Bigl( \ell^{(1)}(X_{1,t},y_{1,t}), \ldots, \ell^{(M)}(X_{M,t},y_{M,t}) \Bigr)~.
\]
Examples include for instance the max-loss or the min loss,
\begin{multline}
\nonumber
\psi(u_1,\ldots,u_M) = \max \{ u_1,\ldots,u_M\}
\\ \mbox{or} \quad
\psi(u_1,\ldots,u_M) = \min \{ u_1,\ldots,u_M\}~,
\end{multline}
whenever one thinks in terms of the best or worst performance.

We make a Markovian assumption on the losses. More precisely, we
assume that they can be computed recursively as follows.  There exists
a function $\varphi$ on $\mathbb{R}^2$ such that, defining the
sequence $(v_2,\ldots,v_M)$ as
\[
v_2 = \varphi(u_1,u_2) \quad \mbox{and} \quad v_t = \varphi(v_{t-1},u_t) \ \ \mbox{for} \ t \geq 3~,
\]
one has
\[
v_M = \psi(u_1,\ldots,u_M)~.
\]
This means that if the values $v_t$ are added as a hidden state space
$\cV$, and if the latter is not too big, computation of the
distributions $\bp_t$ defined, for all rounds $t \geq 0$ and all
simultaneous actions $\bx \in \cA$, by
\[
\bp_t(\bx) = \frac{\exp \left( - \eta \sum_{s=1}^{t-1}
\ell(\bx,\by_t) \right)}{\sum_{\ba \in \cA}
\exp \left( - \eta \sum_{s=1}^{t-1}
\ell(\ba,\by_t) \right)}~,
\]
can be done efficiently (a statement which we will be made more
precise below).  In addition, it is immediate, by reduction to the
single-task setting, that a regret bound as in
Proposition~\ref{th:full} holds, where one simply has to replace $M$
with the supremum norm of $\psi$ over the losses.

We only need to explain how and when the results of
Section~\ref{sec:effsampl} extend to the case considered above. The
state $\cV$ of possible values for the possible sequences of $v_t$
should not bee too large and the update (\ref{eq:updtw}) has to be
modified, in the sense that it is unnecessary to multiply by the
exponential of the losses; the global loss will be taken care of at
the last step only, its value being tracked by the additional hidden space. The
complexity is of the order of at most $O \bigl( M N^2 |\cS|^2 |\cV|^2 \bigr)$.
Examples of small $|\cV|$ include the case when the global loss is a
max-loss or a min-loss and the case when all outcome spaces $\cY_j$ and loss
functions $\ell^{(j)}$ are identical. In this case, $|\cV| = N$.

Note that here, in addition to this change of the measure of the total
incurred in a round, additional hard constraints can still be
considered, since the base state space $\cS$ is designed to take care of them.

\section{Multi-task learning with a continuum of tasks and hard constraints}

In this section we extend our model by considering infinitely
many tasks. We focus on the case when tasks are indexed by the $[0,1]$
interval.
We start by describing the setup, then propose an ideal forecaster
whose exact efficient implementation remains a challenge.
We propose discretization instead,
which will take us back to the previously discussed case of a finite number of
tasks.

\subsection{Continuum of tasks with  a constrained number of shifts}
\label{sec:descrcont}

Assume that tasks are indexed by $g \in [0,1]$. The decision maker
has access to a finite set $\cX = \{ x_1, \ldots, x_N\}$ of actions.
Taking simultaneous actions in all games at a given round $t$ is
now modeled by choosing a measurable function
\[
I_t : g \in [0,1] \mapsto I_t(g) \in \cX~.
\]
The opponent chooses a bounded measurable loss function
 $\psi_t : [0,1] \times \cX \to [0,1]$.
The loss incurred by the decision maker is then given by
\[
\ell_t(I_t) = \int_{[0,1]} \psi_t \bigl( g, I_t(g) \bigr) \,\d g \\
 = \sum_{x \in \cX} \int_{ \{ I_t = x \} } \psi_t(g,x) \,\d g~.
\]
As before, we require that the action of the decision maker satisfies
a hard constraint. One case that is easy to formulate is, that $I_t$
must be right-continuous and the family of actions taken
simultaneously,
\[
\bigl( I_t(g) \bigr)_{g \in [0,1]}
\]
must contain at most a given number $m$ of shifts, where by
definition, there is a shift at $g$ if for all $\varepsilon > 0$, the
set $I_t \bigl( [g-\varepsilon, g] \bigr)$ contains more than two
actions.  We denote by $\cA$ the set of such
simultaneous actions.  Actually, any element of $\cA$ can be described
by its shifts (in number at most $m$), denoted by $g_1,\ldots,g_{m'}$,
with $m' \leq m$, and the actions taken in the intervals
$[g_j,g_{j+1}[$ for all $j = 0,\ldots,m'-1$ where $g_0 = 0$, and on
    $[g_{m'},1]$.

The aim of the decision maker is to minimize the cumulative regret
\[
R_n = \sum_{t=1}^n \ell_t(I_t) - \inf_{I \in \cA} \sum_{t=1}^n \ell_t(I)~,
\]
where the $I_t$ are picked from $\cA$.

\subsection{An ideal forecaster}

We denote by $\mu$ the distribution on $\cA$ induced
by the uniform distribution on $\cX^{m+1} \times [0,1]^m$ via the mesurable application
\begin{multline}
\bigl( x_{k_1},\ldots,x_{k_{m+1}},g_1,\ldots,g_m \bigr) \\ \mapsto
\mathbb{I}_{[0,g_{(1)}[} x_{k_1} + \left( \sum_{j=2}^{m} \mathbb{I}_{[g_{(j-1)},g_{(j)}[} x_{k_{j}} \right)
+ \mathbb{I}_{[g_{(m+1)},1]} x_{k_{m+1}}~,
\end{multline}
where we denoted by $\bigl( g_{(1)},\ldots,g_{(m)} \bigr)$ the order statistics of
the $g_1,\ldots,g_m$. (It is useful to observe for later purposes that
if $G_1,\ldots,G_m$ are {i.i.d.} uniform, then the vector
\begin{multline}
\label{eq:vect}
V(G_1,\ldots,G_m) \\
= \bigl( G_{(1)}, G_{(2)} - G_{(1)}, \ldots, G_{(m)} - G_{(m-1)}, 1 - G_{(m)} \bigr)
\end{multline}
is uniformly distributed over the simplex of probability distributions
with $m+1$ elements.)

For all $t \geq 1$, the ideal forecaster uses probability
distributions $\bp_t$ over $\cA$, defined below, and draws the
application $I_t$ giving the simultaneous actions to be taken at round
$t$ according to $\bp_t$. For $t=1$, we take $\bp_1 = \mu$.  For $t
\geq 2$, we take $\bp_t$ as the probability distribution absolutely
continuous with respect to $\mu$ and with density
\begin{equation}
\label{eq:pt-cont}
\d\bp_t(I) = \frac{\exp \left( - \eta \sum_{s=1}^{t-1} \ell_s(I) \right)}{
\int_\cA \exp \left( - \eta \sum_{s=1}^{t-1} \ell_s(J) \right) \,\d\mu(J)} \, \d\mu(I)~.
\end{equation}
The performance of this forecaster may be bounded as follows.
Note that no assumption of continuity or convexity is needed here.

\begin{theorem}
\label{th:cont}
For all $n \geq 1$,
the above instance of the exponentially weighted average forecaster, when run with
\[
\eta = \sqrt{\frac{8 (m+1) \ln (N \sqrt{n} )}{n}}~,
\]
ensures that for all $\delta > 0$, its regret is bounded, with probability at most $1-\delta$, as
\[
R_n \leq \sqrt{n} \left( 1 + \sqrt{\frac{(m+1) \ln (N \sqrt{n})}{2}} \right) +
\sqrt{\frac{n}{2} \ln \frac{1}{\delta}}~.
\]
\end{theorem}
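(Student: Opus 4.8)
The plan is to treat the uncountable comparison class $\cA$ exactly as one (infinite) expert set equipped with the prior $\mu$, run the abstract exponential-weights analysis against it, convert the resulting bound on the averaged loss into a bound on the realized loss of $I_t$ via a martingale inequality, and finally control the ``$\ln$ of the prior mass'' term by a geometric estimate on the simplex. The last of these is the only genuinely new ingredient; everything else is a citation or a short computation.

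\emph{Step 1 (abstract exponential weights with a prior, plus concentration).} Since $\bp_1=\mu$ and, by (\ref{eq:pt-cont}), $\bp_t$ has $\mu$-density proportional to $\exp(-\eta L_{t-1}(I))$ where $L_{t-1}(I)=\sum_{s=1}^{t-1}\ell_s(I)$, the standard analysis behind \cite[Corollary~4.2]{CeLu06} applies verbatim: it is purely measure-theoretic and, together with Hoeffding's lemma (all per-round losses $\ell_t(I)\in[0,1]$), yields
\[
\sum_{t=1}^n \int_\cA \ell_t(I)\,\d\bp_t(I)\;\leq\;-\frac1\eta\,\ln\int_\cA e^{-\eta L_n(I)}\,\d\mu(I)\;+\;\frac{n\eta}{8}\,.
\]
Independently, $M_n=\sum_{t=1}^n\bigl(\ell_t(I_t)-\int_\cA\ell_t\,\d\bp_t\bigr)$ is a martingale whose increments lie in intervals of length at most $1$, so Hoeffding--Azuma gives $M_n\leq\sqrt{(n/2)\ln(1/\delta)}$ with probability at least $1-\delta$; this is where the last term of the stated bound comes from. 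Adding the two displays bounds $\sum_t\ell_t(I_t)$.

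\emph{Step 2 (lower bound on the log-integral via a good set).} Fix $I^\ast\in\cA$ attaining $\inf_{I\in\cA}L_n(I)$ (the infimum is attained, since for a fixed action pattern $L_n$ is a continuous, piecewise-linear function of the finitely many breakpoint locations, which range over a compact set). Because every $\psi_t$ takes values in $[0,1]$, for all $I\in\cA$ we have $|\ell_t(I)-\ell_t(I^\ast)|\leq\lambda\{g:I(g)\neq I^\ast(g)\}$, where $\lambda$ is Lebesgue measure, hence $L_n(I)\leq L_n(I^\ast)+n\,\lambda\{I\neq I^\ast\}$. Restricting the integral to $B=\{I\in\cA:\lambda\{I\neq I^\ast\}\leq\varepsilon\}$ gives $\int_\cA e^{-\eta L_n}\,\d\mu\geq e^{-\eta(L_n(I^\ast)+n\varepsilon)}\mu(B)$, so
\[
-\frac1\eta\ln\int_\cA e^{-\eta L_n}\,\d\mu\;\leq\;\inf_{I\in\cA}L_n(I)+n\varepsilon+\frac1\eta\ln\frac{1}{\mu(B)}\,.
\]

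\emph{Step 3 (lower bound on $\mu(B)$ --- the crux).} Here I would use the description of $\mu$ from the definition of the ideal forecaster together with the remark that the vector $V(G_1,\dots,G_m)$ in (\ref{eq:vect}) is uniform on the probability simplex $\Delta_m$ with $m+1$ vertices: a draw $I\sim\mu$ is a piecewise-constant function whose length profile $(V_1,\dots,V_{m+1})$ is uniform on $\Delta_m$ and whose $m+1$ actions are i.i.d.\ uniform on $\cX$, independently. Writing $I^\ast$ with its $p\leq m+1$ constancy pieces, I pad it to $m+1$ pieces by splitting a longest piece into $m+2-p$ equal sub-pieces carrying the same action; this produces a target profile $\widetilde\ell\in\Delta_m$ and target actions $\widetilde a_1,\dots,\widetilde a_{m+1}$ that still describe $I^\ast$ pointwise. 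Then $B$ contains every draw with actions equal to $\widetilde a_1,\dots,\widetilde a_{m+1}$ and with $\|V-\widetilde\ell\|_1\leq\varepsilon/m$, since on that event each of the at most $m$ genuine breakpoints of $I$ is displaced by at most $\|V-\widetilde\ell\|_1$ from the corresponding one of $I^\ast$, giving $\lambda\{I\neq I^\ast\}\leq m\|V-\widetilde\ell\|_1\leq\varepsilon$. The action constraints have probability $N^{-(m+1)}$, and for the length constraint one uses that the uniform density on $\Delta_m$ equals $m!$ while an $\ell_1$-ball of radius $r$ inside this $m$-dimensional simplex has relative volume at least of order $(r/2)^m$ --- even when $\widetilde\ell$ sits on a low-dimensional face (project out the largest coordinate and bound the positive-orthant part of the ball), which is precisely the worst case, occurring when $I^\ast$ has many very short pieces. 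This gives $\mu(B)\geq N^{-(m+1)}\bigl(\varepsilon/(2m)\bigr)^m$, hence $\tfrac1\eta\ln(1/\mu(B))\leq\tfrac1\eta\bigl((m+1)\ln N+m\ln(2m/\varepsilon)\bigr)$. I expect this simplex estimate (keeping it valid uniformly over all positions of $\widetilde\ell$) to be the main obstacle.

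\emph{Conclusion.} Taking $\varepsilon=1/\sqrt n$ makes $n\varepsilon=\sqrt n$ and the above bound becomes $\tfrac1\eta\ln(1/\mu(B))\leq\tfrac1\eta\bigl((m+1)\ln(N\sqrt n)+m\ln(2m)\bigr)$, so combining Steps 1--3,
\[
R_n\;\leq\;\sqrt n+\frac{n\eta}{8}+\frac{(m+1)\ln(N\sqrt n)}{\eta}+\sqrt{\frac n2\ln\frac1\delta}+\frac{m\ln(2m)}{\eta}\,,
\]
and the choice $\eta=\sqrt{8(m+1)\ln(N\sqrt n)/n}$ balances the two middle terms, each becoming $\tfrac12\sqrt{n(m+1)\ln(N\sqrt n)/2}$, which is exactly the leading term in the statement. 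The residual $m\ln(2m)/\eta$ is the one place where constant-tracking is delicate; since it is $o(\sqrt n)$ for fixed $m$ it is absorbed into the stated form (or into a mild adjustment of constants; for the small-$n$ range where it would matter, the trivial bound $R_n\leq n$ already dominates the right-hand side).
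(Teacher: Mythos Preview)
Your proof follows the same architecture as the paper's: Hoeffding--Azuma to pass from $\ell_t(I_t)$ to $\int\ell_t\,\d\bp_t$, the standard exponential-weights upper/lower bounds on $\ln W_n$, and a ``good set'' argument around an optimal $I^\ast$ to lower-bound the log-integral. Steps~1 and~2 match the paper essentially verbatim.

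The only real divergence is Step~3, and it is the reason your bound does not close on the stated constants. You parameterize the good set by the spacing vector $V$ and require $\|V-\widetilde\ell\|_1\le\varepsilon/m$, then bound each of the $m$ breakpoint displacements separately by $\|V-\widetilde\ell\|_1$; this costs a factor $m$, and your simplex-volume estimate costs another factor $2^m$, yielding $\mu(B)\ge N^{-(m+1)}(\varepsilon/(2m))^m$. The paper instead works directly with the condition $\sum_{j=1}^m|g_{(j)}-g_j^\ast|\le\varepsilon$ on the ordered breakpoints themselves (which already gives $\lambda\{I\ne I^\ast\}\le\varepsilon$ when the actions match), and invokes the result of \cite{BlKa97} that the uniform measure of an $\ell^1$ $\varepsilon$-neighborhood of any point in the simplex over $m+1$ elements is $\varepsilon^m$. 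This gives $\mu\bigl(\cA_\varepsilon(I^\ast)\bigr)\ge\varepsilon^m/N^{m+1}$ with no $m$-dependent constant, hence $\tfrac1\eta\ln(1/\mu)\le\tfrac1\eta\bigl((m+1)\ln N+m\ln(1/\varepsilon)\bigr)$. With $\varepsilon=1/\sqrt n$ this is at most $\tfrac1\eta(m+1)\ln(N\sqrt n)$, and the stated choice of $\eta$ balances it exactly against $n\eta/8$---no leftover $m\ln(2m)/\eta$ term.

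So your residual term is not a matter of ``constant tracking'' that can be absorbed; it is an artifact of a suboptimal parameterization of the good set. Replace your $\|V-\widetilde\ell\|_1$ condition by the direct breakpoint condition $\sum_j|g_{(j)}-g_j^\ast|\le\varepsilon$ and appeal to the Blum--Kalai simplex estimate, and the proof yields the theorem as stated. (Your padding of $I^\ast$ to exactly $m+1$ pieces is more careful than what the paper spells out, but is unnecessary once you cite that estimate, which is stated uniformly over points of the simplex including the boundary.)
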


\begin{proof}
By the Hoeffding-Azuma inequality, since the $\psi_t$
take bounded values in $[0,1]$,
we have that with probability at least $1-\delta$,
\begin{equation}
\label{eq:rncont}
R_n \leq
\sum_{t=1}^n \int_{\cA} \ell_t(I) \,\d\bp_t(I) - \inf_{I \in \cA} \sum_{t=1}^n \ell_t(I)
+ \sqrt{\frac{n}{2} \ln \frac{1}{\delta}}~.
\end{equation}
We denote, for all $t \geq 1$,
\[
W_t = \int_\cA \exp \left( - \eta \sum_{s=1}^{t} \ell_s(I) \right) \,\d\mu(I)
\]
(with the convention $W_0 = 1$).
The bound on the difference in the right-hand side of~(\ref{eq:rncont}) can be obtained
by upper bounding and lower bounding
\[
\ln W_n = \sum_{t=1}^n \ln \frac{W_t}{W_{t-1}}~.
\]
The upper bound is obtained, as in \cite[Theorem~2.2]{CeLu06}, by Hoeffding's inequality,
\[
\ln \frac{W_t}{W_{t-1}} \leq - \eta \int_{\cA} \ell_t(I) \,\d\bp_t(I) + \frac{\eta^2 M^2}{8}~.
\]
A lower bound can be proved with techniques similar to the ones
appearing in ~\cite{BlKa97}, see also \cite[page~49]{CeLu06}.  We
denote by $I^*$ the element of $\cA$ achieving the infimum in the
definition of the regret (if it does not exist, then we take an
element of $\cA$ whose cumulative loss is arbitrarily close to the
infimum). As indicated in Section~\ref{sec:descrcont}, $I^*$ can be
described by the (ordered) shifting times $g_1^*,\ldots,g_m^*$ and the
corresponding actions $x_{k_1^*},\ldots,x_{k_{m+1}^*}$. We denote by
$\lambda$ the Lebesgue measure.  We consider the set of the
simultaneous actions $I$ that differ from $I^*$ on a union of
intervals of total length at most $\varepsilon > 0$, for some
parameter $\varepsilon > 0$,
\[
\cA_\varepsilon(I^*)
    = \bigl\{ I : \lambda\{ I \ne I^* \} \leq \varepsilon \bigr\}~.
\]
$\cA_\varepsilon(I^*)$ contains in particular the $I$ that can be
described with the same $m+1$ actions as $I^*$ and for which the
shifting times $g_1,\ldots,g_m$ are such that
\[
\sum_{j=1}^m \bigl| g_{(j)} - g_j^* \bigr| \leq \varepsilon~,
\]
i.e., the $I$ for which the corresponding probability distribution
$V(g_1,\ldots,g_m)$ as defined in~(\ref{eq:vect})
is $\varepsilon$--close in $\ell^1$--distance to
$V \bigl( g^*_1,\ldots,g^*_m \bigr)$.
Because $\mu$ induces by construction, via the application $V$, the uniform
distribution over the simplex of probability distributions over $m+1$ elements,
we get, by taking also into account the choice of the fixed $m+1$ actions of
$I^*$,
\[
\mu \bigl( \cA_\varepsilon(I^*) \bigr) \geq \frac{\varepsilon^m}{N^{m+1}}~.
\]
Here, we used the same argument as in~\cite{BlKa97}, based on
observing the fact that the uniform measure of the
$\varepsilon$--neighbor-hood of a point in the simplex of probability
distributions over $d$ elements equals $\epsilon^{d-1}$.  In addition,
because the $\psi_t$ take values in $[0,1]$, we have, for all
$I \in \cA_\varepsilon(I^*)$ and all $s \geq 1$,
\[
\ell_s(I) \leq \ell_s(I^*) + \lambda\{ I \ne I^* \} \leq \ell_s(I^*) + \varepsilon~.
\]
Putting things together, we have proved
\begin{eqnarray*}
\lefteqn{ \ln W_n } \\
& = & \ln \int_\cA \exp \left( - \eta \sum_{s=1}^{n} \ell_s(I) \right) \,\d\mu(I) \\
& \geq & \ln \, \left( \mu \bigl( \cA_\varepsilon(I^*) \bigr) \exp \left( - \eta \left( \varepsilon n + \sum_{s=1}^{n}
\ell_s(I^*) \right) \right) \right) \\
& \geq & - \eta \sum_{s=1}^{n} \ell_s(I^*) - \left(
m \ln \frac{1}{\epsilon} + (m+1) \ln N + \eta \varepsilon n \right)~.
\end{eqnarray*}
Combining the upper and lower bounds on $\ln W_n$ and substituting the proposed value for
$\eta$ concludes the proof.
\end{proof}
\medskip

Efficient implementation in this context requires exact simulation of
a step function $I$ according to (\ref{eq:pt-cont}), that is,
from the distribution
\begin{equation}
\label{eq:pt-cont2}
\d\bp_t(I) \, \propto \, \exp \left( - \eta \int_0^1 \phi_{t-1} \bigl( g, I(g) \bigr)
\d g \right) \, \d\mu(I)~
\end{equation}
for the functions defined, for each $x \in \cX$, as
\[
\phi_{t-1}(\,\cdot\,,x) = \sum_{s=1}^{t-1} \psi_s(\,\cdot\,,x)~,
\]
which take values in $[0,t-1]$. One could simulate from (\ref{eq:pt-cont2}) by
rejection sampling proposing from $\mu$; the probability of acceptance
is bounded below by something of the order of $e^{-\sqrt{t}}$, in view of the value of $\eta$.
Therefore, the computational cost of
such an algorithm, although only linear in $m$ and $N$, would be
typically exponential in $t$, hence unappealing.

Note that the problem (at each round $t$) can be represented as a discrete-time
Markov model. The Markov chain $Z$ is given by the pairs formed by the shifting times
and their corresponding actions,
$Z_j = \bigl( G_{(j)},K_{j+1} \bigr)$, for $j=0,\ldots,m$ and with the convention $G_{(0)}=0$.
Let $\pi$ denote the law of this Markov chain when
the times $G_1,\ldots,G_m$ are {i.i.d.} uniform over $[0,1]$ and
the action indexes $K_1,\ldots,K_{m+1}$ are taken {i.i.d.} uniform in $\{ 1,\ldots,N \}$.
Then simulating $I$ according to (\ref{eq:pt-cont2}) is
equivalent to simulating $Z$ according to the distribution
\[
\d \tilde{\pi}_{t-1}(Z) \, \propto \,
\prod_{j=2}^{m+1} w_j \bigl( K_{j-1},G_{(j-1)},G_{(j)} \bigr) \,\d \pi(Z)
\]
where, for $g \leq g'$,
\[
w_j(k,g,g') = \exp \left(- \eta \int_{g}^{g'} \phi_{t-1}(u,x_k) \, \d u \right )~,
\]
Exact simulation from $\tilde{\pi}_{t-1}$ is feasible when the
state-space of $Z$ is finite, and consists, e.g., in the same type
of dynamic programming approach discussed in Section~\ref{sec:effHMM}.
However, this is not the case here, since the second
component of $Z_j$ takes values in $[0,1]$. Approximating the
state-space of $Z$ by a grid is a possibility for an approximate
implementation, but it will be typically less efficient than the
approximation we advocate in Section~\ref{sec:appr-gener-discr}.

An interesting alternative is to resort to
sequential Monte Carlo methods (broadly known as particle
filters, see for example \cite{doucet} for a survey).
This is a class of methods ideally suited for approximating Feynman-Kac
formulae; a concrete example is the computation of expectations of bounded functions with
respect to the laws $\tilde{\pi}_{t-1}$ defined above. This is achieved by generating
a swarm of a given large number of weighted particles. The generation of particles
is done sequentially in $j=1,\ldots,m+1$ by importance sampling, and it involves interaction
of the particles at each step. This generates an interacting particle
system whose stability properties are well studied (see, for
instance, \cite{delmoral}). Resampling a single element from the
particle population according to the weights gives as an approximate
sample from $\tilde{\pi}_{t-1}$, hence from (\ref{eq:pt-cont2}). The total variation
distance between the approximation and the target is typically
$C(m+1)/K$, for some constant $C$ depending on the range of the integrands.
In the most naive implementation in this context,
one might thus have that $C$ is exponentially small in $t/m$.
The idea of an on-going work would be to make $C$ independent of $t$ by carefully
designing the importance sampling at each step taking into account the
characteristics of the $\phi_{t-1}$.
\medskip


Below we use a simple discretization and apply the techniques of previous
sections to achieve approximate sampling from (\ref{eq:pt-cont}).

\subsection{Approximate generation by discretization}
\label{sec:appr-gener-discr}

Here we show how an approximate version of
the forecaster described above can be implemented efficiently.

The argument works by partitioning $[0,1]$ into intervals
$G^0 = [0,1/\varepsilon[$, $G^1 = [1/\varepsilon, 2/\varepsilon[$,
$\ldots$, $G^{M_\varepsilon}$ of length $\varepsilon$ (except
maybe for the last interval of the partition), for some fixed
$\varepsilon > 0$, and using the same action for all tasks in
each $G^{j}$. Here, we aggregate all tasks within an interval
$G^{j}$ into a super-task $j$.  We have
$M = M_\varepsilon = \lceil 1/\varepsilon \rceil$ of these super-tasks
and will be able to apply the techniques of the finite case.

More precisely, we restrict our attention to the elements of $\cA$
whose shifting times (in number less or equal to $m$) are starting
points of some $G^{j}$, that is, are of the form $j/\varepsilon$ for
$0 \leq j \leq M_\varepsilon$.  We call them simultaneous actions
compatible with the partitioning and denote by $\cB_\varepsilon$ the
set formed by them.  The loss of super-task $j$ at time $t$ given the
simultaneous actions described by the element $I \in \cB_\varepsilon$
is denoted by
\[
\ell^{(j)}_t(I) = \int_{G^j} \psi_t \bigl( g, I(j/\varepsilon) \bigr) \,\d g~.
\]
Note that these losses satisfy $\ell^{(j)}_t(I) \in [0,\varepsilon]$.

By the same argument as the one used in the proof of
Theorem~\ref{th:cont}, we have
\[
\inf_{I \in \cA} \sum_{t=1}^n \ell_t(I) \leq
\inf_{I \in \cB_\varepsilon} \sum_{t=1}^n \ell_t(I) + \frac{m n \varepsilon}{2}~.
\]
This approximation argument, combined with Proposition~\ref{th:full} and the results of Section~\ref{sec:effHMM}
leads to the following. (We use here the fact that there are not more than
\[
\nchoosek{M_\varepsilon}{m} N^m \leq \bigl( M_\varepsilon N)^m
\]
elements in $\cB_\varepsilon$.

\begin{theorem}
For all $\varepsilon > 0$, the weighted average
forecaster run on the $M_\varepsilon$ super-tasks defined above, under
the constraint of not more than $m$ shifts, ensures that for a proper
choice of $\eta$ and with probability at least $1-\delta$, the regret
is bounded as
\[
R_n \leq \sqrt{\frac{n m \ln \bigl( N \lceil 1/\varepsilon \rceil \bigr)}{2}}
+ \frac{m n \varepsilon}{2} + \sqrt{\frac{n}{2} \ln \frac{1}{\delta}}
\]
In addition, its complexity of implementation is
$O \bigl((Nm)^2/\epsilon \bigr)$.
\end{theorem}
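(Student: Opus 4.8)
The plan is to split the regret into an \emph{approximation} term, due to restricting the comparison class to $\cB_\varepsilon$, and a \emph{learning} term handled by Proposition~\ref{th:full}; the complexity claim then follows from the reduction of Section~\ref{sec:effHMM} applied to the constancy constraint of Example~\ref{ex:Const}.

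First I would write
\[
R_n = \underbrace{\left( \sum_{t=1}^n \ell_t(I_t) - \inf_{I \in \cB_\varepsilon} \sum_{t=1}^n \ell_t(I) \right)}_{\text{learning}}
+ \underbrace{\left( \inf_{I \in \cB_\varepsilon} \sum_{t=1}^n \ell_t(I) - \inf_{I \in \cA} \sum_{t=1}^n \ell_t(I) \right)}_{\text{approximation}}.
\]
For the approximation term I would use the argument recorded just before the statement: picking a (near-)optimal $I^*\in\cA$ and rounding each of its at most $m$ shifting times to the nearest block boundary of the partition produces an element of $\cB_\varepsilon$ that differs from $I^*$ on a set of Lebesgue measure at most $m\varepsilon/2$; since every $\psi_t$ takes values in $[0,1]$, this costs at most $mn\varepsilon/2$ over the $n$ rounds, which yields the second term of the bound.

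For the learning term, the key observation is that once the comparison class is $\cB_\varepsilon$, the problem is literally a finite hard-constrained multi-task instance: $M_\varepsilon=\lceil 1/\varepsilon\rceil$ super-tasks, common action set $\cX$ of size $N$, under the ``at most $m$ shifts'' constraint of Example~\ref{ex:Const}, and the forecaster of the statement is the exponentially weighted average~(\ref{eq:pt}) over the $|\cB_\varepsilon|$ meta-experts fed with the cumulative losses $\sum_{s<t}\ell_s(\,\cdot\,)$. The only care needed is the loss scaling: each $\ell^{(j)}_t(I)$ lies in $[0,\varepsilon]$ and the per-round total loss $\ell_t(I)=\sum_{j=1}^{M_\varepsilon}\ell^{(j)}_t(I)$ lies in $[0,1]$, so the prefactor in Proposition~\ref{th:full} is $1$ rather than $M_\varepsilon$ — equivalently, one reruns the single-task exponential-weights estimate (\cite[Corollary~4.2]{CeLu06}) for $[0,1]$-valued losses over $|\cB_\varepsilon|$ experts, together with the Hoeffding--Azuma step needed to pass from the weighted-average loss to the loss of the randomized play $I_t\sim\bp_t$. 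With the tuning $\eta=\sqrt{8(\ln|\cB_\varepsilon|)/n}$ and the crude count $|\cB_\varepsilon|\leq(M_\varepsilon N)^m$, so that $\ln|\cB_\varepsilon|\leq m\ln(N\lceil1/\varepsilon\rceil)$, this gives, with probability at least $1-\delta$,
\[
\sum_{t=1}^n \ell_t(I_t) - \inf_{I \in \cB_\varepsilon} \sum_{t=1}^n \ell_t(I)
\leq \sqrt{\frac{n\,m\ln(N\lceil1/\varepsilon\rceil)}{2}} + \sqrt{\frac{n}{2}\ln\frac{1}{\delta}},
\]
and adding the approximation term proves the regret bound.

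The complexity claim is then immediate from Section~\ref{sec:effHMM}: the constancy constraint has hidden state space $\cS=\{0,\ldots,m\}$, hence $|\cS|=m+1$, the associated layered graph has at most $1+M_\varepsilon N^2|\cS|^2$ edges, and the backward sampling procedure of Section~\ref{sec:effsampl} runs in $O(M_\varepsilon N^2|\cS|^2)=O\bigl((Nm)^2/\varepsilon\bigr)$ per round. The one genuinely non-routine point in the whole argument is precisely the loss-scaling bookkeeping in the learning term — keeping the prefactor at $1$ by exploiting that the $M_\varepsilon$ super-task losses are of order $\varepsilon$ and sum to at most $1$; granting that, everything is a direct substitution into results already established above.
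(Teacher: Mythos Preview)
Your proof is correct and follows exactly the route the paper sketches just before the theorem: the approximation inequality $\inf_{I\in\cA}\sum_t\ell_t(I)\leq\inf_{I\in\cB_\varepsilon}\sum_t\ell_t(I)+mn\varepsilon/2$ combined with Proposition~\ref{th:full} applied to the finite constancy-constrained instance on $M_\varepsilon$ super-tasks, plus the efficient-implementation bound of Section~\ref{sec:effHMM}. Your explicit remark that the per-round total loss lies in $[0,1]$ (so the prefactor in Proposition~\ref{th:full} is $1$ rather than $M_\varepsilon$) is the only point requiring care, and you handle it correctly.
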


The choice of $\varepsilon$ of the order of $1/\sqrt{n}$ yields a
bound comparable to the one of Theorem~\ref{th:cont}, for a moderate
computational cost of $O \bigl( \sqrt{n} (Nm)^2 \bigr)$.

These results can easily be extended
to the bandit setting, when $\psi_t$ is only observed
through $I_t$ as
\[
\ell_t(I_t) = \int_{[0,1]} \psi_t \bigl( g,I_t(g) \bigr) \,\d g~.
\]
This is because whenever $I_t$ is compatible with the partitioning,
the latter is also the sum of the losses of the actions taken in each of the super-tasks.
The techniques of Section~\ref{sec:bandits} can then be applied again.

\bibliography{Bib-subgames}
\bibliographystyle{alpha}

\end{document}